\title{\LARGE \bf
Optimistic Online Non-stochastic Control via FTRL}
\author{Naram Mhaisen and George Iosifidis
\thanks{N. Mhaisen and G. Iosifidis are with \textit{the Faculty of Electrical Engineering, Mathematics and Computer Science. Delft University of Technology. The Netherlands}.
        {\tt\footnotesize \{n.mhaisen, g.iosifidis\}@tudelft.nl}.}%
}
\newtheorem{theorem}{\textbf{Theorem}}
\newtheorem{lemma}{\textbf{Lemma}}
\newtheorem{assumption}{\textbf{Assumption}}
\newcommand{\dtp}[2]{\langle {#1}, {#2} \rangle}
\newcommand{\grd}{\nabla}
\renewcommand{\vec}[1]{\boldsymbol{#1}}
\DeclareMathOperator*{\argmin}{argmin}
\newcommand{\ti}{\tilde} 
\newcommand{\sumT}{\sum_{t=1}^T} 
\begin{document}

\maketitle
\thispagestyle{empty}
\pagestyle{empty}

\begin{abstract}

This paper brings the concept of ``optimism" to the new and promising framework of online Non-stochastic Control (NSC). Namely, we study how NSC can benefit from a prediction oracle of unknown quality responsible for forecasting future costs. The posed problem is first reduced to an optimistic learning with delayed feedback problem, which is handled through the Optimistic Follow the Regularized Leader (OFTRL) algorithmic family. This reduction enables the design of \texttt{OptFTRL-C}, the first Disturbance Action Controller (DAC) with optimistic policy regret bounds. These new bounds are commensurate with the oracle's accuracy, ranging from $\mathcal{O}(1)$ for perfect predictions to the order-optimal $\mathcal{O}(\sqrt{T})$ even when all predictions fail. By addressing the challenge of incorporating untrusted predictions into online control, this work contributes to the advancement of the NSC framework and paves the way toward effective and robust learning-based controllers.

\end{abstract}

\section{Introduction}
We study the NSC framework originally introduced in \cite{pmlr-v97-agarwal19c}: Consider a discrete-time dynamical system where at each time slot, the controller observes the system state $\vec{x}_t \in \mathbb{R}^{d_x}$ and decides an action $\vec{u}_t \in \mathbb{R}^{d_u}$ which then induces a stage cost $c_t(\vec{x}_t, \vec{u}_t)$, and causes a transition to a new state $\vec{x}_{t+1}$. Note that the cost and the new state are revealed to the controller \emph{after} it commits its action. Similar to \cite{pmlr-v97-agarwal19c}, we study Linear Time Invariant (LTI) systems where the state transition is parameterized by matrices $A\in \mathbb{R}^{d_x\times d_x}$, $B\in \mathbb{R}^{d_x\times d_u}$ and a \emph{disturbance} vector $\vec w_t \in \mathbb{R}^{d_x}$:
\begin{align}
        \label{eq:LTIdynamics}
    \vec{x}_{t+1} = A \vec{x}_{t} + B \vec{u}_{t} + \vec{w}_{t}.
\end{align}
$\vec{w}_t$ can be arbitrarily set by an \emph{adversary}, and we only restrict it to be bounded, i.e., $\| \vec{w}_t\| \leq w, \forall t$. Similarly, the adversary is allowed to select any $l$-Lipschitz convex cost function $c_t:\mathbb{R}^{d_x}\times \mathbb{R}^{d_u}\mapsto \mathbb{R}$ at each slot. Since neither the cost nor disturbances are confined to follow a fixed and/or known distribution, the control problem is ``Non-stochastic".

The controller (or learner) aims to find a (possibly time-varying) policy that maps states to actions, $\pi: \vec{x}\mapsto \vec{u}$, from a given policy class $\Pi$, such that the stage costs $\{c_t(\vec{x}_t, \vec{u}_t)\}_{t=1}^T$ are as small as possible. To quantify the learner's performance, we employ the \emph{policy regret} metric, as presented in \cite{hazan20a-us}. Intuitively, the policy regret measures the accumulated differences between the cost incurred by the learner's policy, and that of a stationary unknown cost-minimizing policy designed with access to all future cost functions and disturbances: 
\begin{align}
    \mathcal{R}_T \doteq \sumT c_t\left(\vec x_t, \vec u_t\right) 
    - \min_{\pi\in\Pi} \sumT
    c_t\big(\vec x_t(\pi), \vec u_t(\pi)\big),
    \label{eq:policy-regret}
\end{align}
where $(\vec x_t(\pi), \vec u_t(\pi))$ is the \emph{counterfactual} state-action sequence that would have emerged under the benchmark policy, whereas $(\vec x_t, \vec u_t)$ is the \emph{actual} state-action sequence that emerged from following possibly different policies by the learner. A sub-linear regret means that the cost endured by the learner will converge to that of the optimal unknown policy at the same sub-linear rate, i.e., $\nicefrac{\mathcal{R}_T}{T}\rightarrow 0$ as $T\rightarrow \infty$. Note that the benchmark depends on the actual \emph{witnessed} costs and disturbances, encoding a stronger concept of ``adaptability" unlike the pessimistic $\mathcal{H}_\infty$ controllers, which assume worst case costs \cite{karapetyan2022regret}. Remarkably, the Gradient Perturbation Controller (GPC) achieves $\mathcal{O}(\sqrt{T})$ regret, which is order optimal, and was shown to deliver superior performance to the more conventional $\mathcal{H}_2$ controllers \cite{pmlr-v97-agarwal19c}.

In contrast to the typical NSC framework, here we consider the existence of a \emph{prediction oracle}. This oracle provides the learner, before committing to the action $\vec{u}_t$, with a forecast for the as-yet-unobserved cost functions $\{c_\tau(\cdot, \cdot)\}_{\tau=t}^{t+d}$ for a specific horizon of $d$ slots. We denote the oracle's outcome as $\tilde{c}_\tau(\cdot,\cdot)$. Notably, the predictions themselves can be influenced by the adversary, meaning that no assumptions on the oracle's \emph{accuracy} are made; the forecasted parameters may either deviate arbitrarily from the truth, or be actually accurate in the best case.

\subsection{Motivation}
The motivation for this addition on the NSC framework stems from the abundance of machine learning forecasting models, which provide high potential improvement if they are adequately accurate.
The effect of such predictions on the classical regret metric (not the policy regret) has been studied in the literature of optimistic online learning, where the ultimate objective is to provide regret guarantees that scale with the accuracy of predictions while always staying sub-linear. Namely: $\mathcal{R}_T =\mathcal{O}(1)$ when all predictions are accurate and $\mathcal{O}(\sqrt{T})$ in all cases. That is, we are assured of reaching the performance of the benchmark policy, yet at a significantly improved rate when predictions happen to be precise. Hence, optimistic online learning represents a highly desirable combination of the best of both worlds: optimal worst-case guarantees with achievable best case guarantees. In fact, optimistic learning algorithms have been attracting considerable attention as the driving force behind recent state-of-the-art results in online constrained optimization \cite{10021291}, online discrete optimization \cite{mhaisen2022optimistic}, online sub-modular optimization \cite{si2023online}, and online fairness \cite{aslan2024fair} to name a few.

Unfortunately, such optimistic algorithms are yet to find their way to online NSC. This might be surprising given that previous NSC results were obtained through a streamlined reduction to the standard Online Convex Optimization (OCO) framework \cite{hazan-oco-book}. Hence, one might anticipate that optimistic NSC algorithms can be derived using a similar approach. Interestingly, this is not the case. Combining optimistic learning and NSC poses unique challenges for both frameworks. \textbf{First}, existing optimistic learning algorithms do not consider cost functions \emph{with memory} and hence cannot handle states. Particularly, these algorithms update their regularizers at each time slot based on the accuracy of the prediction used by the preceding action  \cite{rakhlin2013online,mohri-aistats2016}. However, in stateful systems, an action made at $t$ will have an effect that spans across all slots until $t\!+\!d$, and thus uses predictions for all these slots. Since the accuracy of such multi-step predictions is not available until  $t\!+d+\!1$, we cannot update the regularizer in the standard way. \textbf{Second}, the guarantee of NSC is established via a reduction to the OCO with Memory (OCO-M) framework \cite{NIPS2015_38913e1d}, which in turn is reduced to the standard OCO \cite{MAL-018} via the concept of slowly moving decision variables \cite[Thm. 4.6]{pmlr-v97-agarwal19c} \cite[Thm. 3.1]{NIPS2015_38913e1d}. This later reduction cannot be utilized in optimistic learning where accurate predictions lead to little or \emph{no} regularization, driving consecutive decisions of the optimistic algorithm to vary arbitrarily (up to the set diameter) \cite[Sec $2.2$]{mohri-aistats2016}, \cite[Sec $7.4$]{pmlr-v76-joulani17a}.

This paper tackles exactly these challenges and aims to answer the question: \textit{is it possible to design an online learning algorithm whose policy regret is commensurate with the accuracy of an exogenous prediction oracle, while always staying sub-linear?} We answer this positively and builds upon recent advances in online learning, introducing, to our knowledge, the first optimistic controller for NSC.

\subsection{Contributions}
We achieve such optimistic guarantees by departing from the standard analysis approach of reducing the learner's non-stationary policy to a stationary one \cite{pmlr-v97-agarwal19c, zhao2022non, 10384153}, and instead directly analyzing the non-stationary policy. Specifically, to address the first challenge, we demonstrate that the additive separability of the linearized costs allows expressing the costs as a sum of \emph{memoryless} but \emph{delayed} functions of each of the decision variables. Next, to tackle the second challenge, we analyze the performance of each decision variable \emph{separately} via an alternative reduction to the framework of ``optimism with delay" \cite{optimdelay21}. Nonetheless, we customize this later framework with a specific ``hint" design that exploits the structure of NSC where the cost is indeed delayed but still \emph{gradually} being revealed at each step, leading to tighter bounds. We make these intuition-focused points concrete in the upcoming analysis. 

The main contribution is an optimistic controller with policy regret scaling from $\mathcal{O}(1)$ to $\mathcal{O}(\sqrt{T})$, based on prediction accuracy. The methodology hinges on a new perspective on stateful systems (with a prediction oracle) as systems with delayed feedback, for which we build upon recent results on delay and optimism. The next section reviews the related works. Sec. \ref{sec:prelem} provides the necessary background for the new algorithm, \texttt{OptFTRL-C}, introduced in Section \ref{sec:algo}. We then present numerical examples in Sec. \ref{sec:ne} and conclude.

\section{Related Work}
The NSC problem was initiated in the seminal work of \cite{pmlr-v97-agarwal19c}, which introduced the first controller with sub-linear policy regret for dynamical systems, generalizing the classical control problem to  adversarial convex costs functions and adversarial disturbances. These results were further refined for \emph{strongly} convex functions \cite{simchowitz2020making,Agarwal-log,foster20b-log2}; and systems with fixed or adversarially-changing constraints on the actions \cite{Li_Das_Li_2021, liu2023nsc-const}. Follow-up works also looked at the NSC problem under more general assumptions on the system matrices,  including unknown $(A,B)$ \cite{hazan20a-us}, systems with bandit feedback \cite{GHH}, and time-varying systems \cite{gradu23a-ltv}. Expectedly, the regret bounds worsen in these scenarios, such as increasing to $\mathcal{O}(T^{\nicefrac{2}{3}})$ for unknown systems and $\mathcal{O}(T^{\nicefrac{3}{4}})$ for systems with bandit feedback. Efficiency is also investigated in \cite{10384153} with projection-free methods. Going beyond the typical bounds in terms of $T$, \cite{mhaisen2023adaptive} introduced an adaptive FTRL based controller whose bound is proportional to the witnessed costs and perturbations: $\mathcal{O}((\sumT g_t^{2})^{\nicefrac{1}{2}})$ instead of $\mathcal{O}({\sqrt{T}})$, where $g_t$ depends on the witnessed costs and perturbations. These works do not consider the existence of an untrusted prediction oracle, as the model adopted here. 

The NSC framework was also investigated using other metrics such as dynamic regret \cite{zhao2022non}, adaptive regret \cite{zhang2022adversarial}, and competitive ratio \cite{shi2020online, goel2022competitive}.
For dynamic and adaptive regret, methods with static regret guarantees, as discussed here, are used as building blocks for ``meta" algorithms with static/adaptive guarantees \cite{gradu23a-ltv, simchowitz20a-part-state}. For competitive ratio, it was demonstrated in \cite{goel2023best} that a regret guarantee against the optimal DAC policy automatically implies a competitive ratio with an additive sub-linear term. Hence, the presented algorithm is still highly relevant even for other metrics.

Our work is also related to the robust MPC literature \cite{bemporad2007robust, dullerud2013course}, where (possibly inaccurate) predictions are used. However, the main difference is that while our algorithm is robust to bad predictions, it is not designed based on them. Specifically, our benchmark policy changes when worst case costs and predictions are not witnessed (as dictated by the regret metric). A newer line of research studies the MPC-style algorithms under the regret metric \cite{lin2021perturbation, lin2022bounded}. Perfect predictions are assumed in \cite{lin2021perturbation}, whose bound was later generalized in \cite{lin2022bounded} with parameterized predictions error. While these works consider the refined dynamic regret metric, their regret scales linearly with the prediction error.

Optimism extends the OCO framework by incorporating predictions of future costs, akin to ``certainty equivalent" control. However, a key focus of optimistic algorithms is ensuring that the regret bound does not depend linearly on the accuracy of these predictions. The optimistic learning framework, in its current prevalent form, originated in \cite{rakhlin2013online} where an Online Mirror Descent (OMD) algorithm was presented. Thereafter, optimism was studied under another equivalent framework (FTRL). Optimistic FTRL (OFTRL) has since undergone multiple improvements \cite{pmlr-v76-joulani17a, naram-jrnl}
and we refer the reader to \cite[Sec. 7.12]{orabona2021modern} for comprehensive overview. Only recently, a close connection between delay and optimism emerged \cite{optimdelay21}, a development we leverage in our current analysis of the NSC framework.

While optimism has been studied for stochastic predictions \cite{chen2015online, chen2016using}, and adversarial predictions \cite{mohri-aistats2016, mhaisen2022optimistic}, these findings have not been applied to dynamical systems. In dynamical systems, the study of predictions is limited to either \emph{perfect} predictions \cite{power-prediction, 10.5555/3454287.3455620}, or \emph{fixed} quadratic (thus, strongly convex) cost functions  \cite{zhang2021regret, yu2022competitive, li_robustness_2022}. Our paper contributes to filling this gap. Predictions may also be viewed via the lens of ``context" \cite{levy2023optimism} in the problem of stochastic MDPs with \emph{finite} states and actions. Lastly, the previous works of \cite{li_robustness_2022, yu2022competitive} consider that the full predictions are provided 
\emph{a priori}, meaning that the learner cannot benefit from updated predictions. Like MPC, we drop this assumption and allow the learner to use the most recent predictions. 

\section{Preliminaries}
\label{sec:prelem}
\textbf{Notation.} We denote scalars by small letters, vectors by bold small letters, and matrices by capital letters. Time indexing is done via a subscript. We denote by $\{\vec{a}_t\}_{t=1}^T$ the set $\{\vec{a}_1, \dots, \vec{a}_T\}$. $M\! =\! [M^{[i]}| M^{[j]}]$ denotes the augmentation of $M^{[i]}$ and $M^{[j]}$.  $\|\cdot\|$ denotes the $\ell_2$ norm for vectors and the Frobenius norm for matrices.  $\|\cdot\|_*$ is the dual norm.$\dtp{\cdot}{\cdot}$ is the dot product for vectors and the Frobenius product for matrices.
$\|\cdot\|_{\text{op}}$ is the matrix spectral norm (the induced $\ell_2$ norm). We use $h_{a:b}$ to indicate $\sum_{s=a}^b h_s$ when $s$ is irrelevant, and $f(\cdot)$ when the function's argument are irrelevant.

\textbf{DAC policy class.} The class $\Pi$ under consideration in this paper, and in the broader NSC literature, is the Disturbance Action Controllers (DAC) (see \cite[Ch. 6]{hazan-nsc-book} for a general reference). The motivation behind DAC lies in the combination of expressive power and efficient parametrization. Specifically, DAC can approximate the large class of linear controllers, which, for instance, is guaranteed to include the universally optimal controller in the LQR settings, and the linear positive systems with linear objective function \cite{Rantzer}. Simultaneously, the use of DAC actions has been demonstrated to induce \emph{convex} cost functions in its parametrization.

A policy $\pi \in \Pi^{}$, with a memory length of $p$, is parameterized by $p$ matrices $M\doteq\big[M^{[1]}|M^{[2]}|\dots |M^{[j]}|\dots|M^{[p]}\big]$, $M^{[j]}\in \mathbb{R}^{d_u \times d_x}$, and a fixed stabilizing controller $K$. We define the set $\mathcal{M}\doteq\{M:\|M\| \leq \kappa_M \}$, with the bounded variable assumption. The action at a step $t$ according to a policy $\pi_t\in \Pi$ is then calculated as follows:
\begin{align}
    \label{eq:dac-action}
    \vec{u}_{t} = K \vec{x}_t + \sum_{j=1}^p M^{[j]}_t \vec{w}_{t-j}.
\end{align}
$K$ is pre-calculated and provided as an input. This is formalized through the \emph{strong stability} assumption \cite[Def. 3.1]{cohen2018online}, which is standard in NSC. It ensures the existence a controller $K$ such that $\|(A+BK)^t\|_{\text{op}}\leq\kappa (1-\delta)^t$ for $\delta\in(0,1]$, where $\kappa>0$. Here, we make a slightly stricter assumption, enabling us to simplify the analysis:
\begin{assumption}
The system $(A,B)$ is intrinsically stable: $\|A\|_{\text{op}}\leq 1-\delta$ for some $\delta\in(0,1]$.
\end{assumption}

The above assumption allows us to satisfy the strong stability assumption with $K$ being the zero matrix. Otherwise, we can revert to the strong stability assumption itself. This simplification facilitates the analysis without much loss in generality, as discussed in, for example, \cite[Remark 4.1]{zhang2022adversarial}. Additionally, we assume $\|B\|\leq \kappa_B$. The boundedness of $\|A\|$ follows from its spectral norm bound.

When using DAC policies, it is known that the state at $t+1$ can be described as a linear transformation of the parameters chosen by the learner in the previous $t$ slots $M_1,M_2,\dots,M_t$:
\begin{align}
    \vec x_{t+1} &= \sum _{i=0}^t A^i \bigg(\!B\sum_{j=1}^p \big(M^{[j]}_{t-i}\ \vec w_{t-i-j}\big) + \vec w_{t-i}\!\bigg)
    \\
    & =  \sum _{i=0}^{t} A^i \big(BM_{t-i}\ \vec {\overline{w}}_{t-i-1} + \vec w_{t-i}\big), 
    \label{eq:state-nonstat-m}
\end{align}  
where we defined $\vec {\overline{w}}_{t-i-1}\doteq (\vec w_{t-i-1}, \dots, \vec w_{t-i-p})$ so as to express the vector $\sum_{j=1}^p M_t^{[j]} \vec w_{t-j}$ compactly as $M_t\overline{\vec w}_{t-1}$. 
The above expression for the state can be obtained by simply unrolling the dynamic in \eqref{eq:LTIdynamics} and assuming, w.l.o.g that the initial state $\vec x_1$ (before executing $M_1,\dots,M_t$) is $\vec 0$. This is proven, e.g., in \cite[Lem. 7.3]{hazan-nsc-book} and \cite[Lem. 4.3]{zhang2022adversarial}. 

\textbf{Cost functions}. While our guarantees still hold for general convex cost functions, we focus here on the linear case:
\begin{assumption}
The cost is linear in the state and action $ c_{t}(\vec{x}_t, \vec{u}_t) = \dtp{\vec \alpha_t}{\vec{x}_t} + \dtp{\vec \beta_t}{\vec{u_t}}.$ $\|\vec \alpha_t\| \leq \alpha$ and $\|\vec \beta\| \leq \beta$.
\end{assumption}
The linearity assumption provides a useful structure in the analysis (separability) and enables us to quantify the prediction error in terms of the parameters $\vec \alpha_t$ and $\vec \beta_t$. It does not, however, compromise the presented regret guarantees. In fact, the linear costs are the most challenging\footnote{As indicated in Sec. II, in the case of strongly convex costs, a tighter regret bound of $\mathcal{O}(\log(T))$ (compared to $\mathcal{O}(\sqrt{T})$) is possible \cite{foster20b-log2}.} in the online learning settings; the regret caused by a sequence of general convex function can be indeed upper bounded by the regret caused by linearization of those functions. Thus, online learning works often focused on the linear case (see discussion on linearization in \cite[Sec. 2.4]{MAL-018}). Future costs can thus be predicted through the oracle's output $\{\vec\alpha_\tau, \vec \beta_\tau \}_{\tau=t}^{t+d}$. 

\textbf{Predictions.} The prediction model we use has several advantages over those that appear in the preceding section. $(i)$ We allow the prediction oracle to update its forecast \emph{at every decision slot $t$ } (similar to MPC). This flexibility is important, as in practice predictions can improve with time. $(ii)$ the analysis reveals that the parameter $d$ needs to scale logarithmically with $T$. This implies that predictions are not required for the entire future. $(iii)$ the presented algorithm and its guarantees place no assumptions on the predictions' quality. To our knowledge, this represents the most general prediction-based setting for online control.

\section{Online control with Optimistic FTRL}
\label{sec:algo}
We introduce first few definitions to facilitate the presentation of the algorithm. Define the \emph{forward cost} function:
\begin{align}
    F_{t}(M) \doteq \sum_{i=0}^{d} f^{(i)}_{t+i}(M) \label{eq:forward_cost},
\end{align}
where each $f^{(i)}_{t+i}(M)$ function shall describe the contribution of $M$ to the cost experienced at slot $t+i$, and is defined as
\begin{align}
f_t^{(i)}(M) \doteq
\begin{cases}
  \dtp{\vec \alpha_t}{\vec \psi^{i-1}_t(M)} & \text{if } i \geq 1, \\
  \dtp{\vec \beta_t}{\vec \psi_t(M)} & \text{if } i = 0, 
\end{cases}
\label{eq:partial_functions}
\end{align}
with $\vec{\psi}^{i}_t: \mathbb{R}^{d_u\times (d_x p)} \mapsto \mathbb{R}^{d_x},$ and $\vec{\psi}_t: \mathbb{R}^{d_u\times (d_x p)} \mapsto \mathbb{R}^{d_u}$ being the following linear transformations, which are used to simplify the presentation of the action and state expression in \eqref{eq:dac-action} and \eqref{eq:state-nonstat-m}, respectively: 
\begin{align}
 \vec{\psi}^{i}_{t+1}(M)&\doteq  A^i(BM\overline{\vec{w}}_{t-i-1}+\vec{w}_{t-i}), \label{eq:state-trans}
 \\
 \vec{\psi}_t(M)&\doteq   M\overline{\vec{w}}_{t-1}.\label{eq:action-trans}
\end{align}
The role of the functions in \eqref{eq:partial_functions} will become clear later in the analysis. Roughly, the cost $c_t$ will be expressed as a sum of them. 
Denoting by $G_{t}^{(i)}= \grd_{M} f^{(i)}_t(M)$, we have:
\begin{align}
G_{t}^{(i)} =
    \begin{cases}
    B^\top(A^{i-1})^\top \vec \alpha_t \overline{\vec{w}}_{t-i-2}^\top & \text{if } i \geq 1, \\
  \vec{\beta}_t \overline{\vec{w}}_{t-1}^\top  & \text{if } i = 0.
    \end{cases}
    \label{eq:grads}
\end{align}
Note that $G_{t}^{(i)}$ is a ${d_u \times d_xp}$ matrix with the $(m,n)$-th element being the partial derivative of $f_t^{(i)}$ w.r.t. the $(m,n)$-th element of $M$.
From the above, we can get the bounds
\begin{align}
\|G_{t}^{(i)}\| \leq
\begin{cases}
  \alpha\kappa_Bpw (1-\delta)^{i-1} &\doteq g^{(i)} \quad \text{if } i \geq 1,
  \\
  \beta p w &\doteq g^{(0)} \quad \text{if } i = 0, \label{eq:grad-mags}
\end{cases}
\end{align}
using that for any matrices $A$,$B$, and vector $\vec w$ $\|AB\|=\|A\|\|B\|$ and $\|A\vec w\| \leq \|A\|_{\text{op}} \|\vec w\|$.
We also define the prediction $\ti{G}_{t}^{(i)}$, which we can construct by plugging the oracle's output in \eqref{eq:grads}, and hence we have the \emph{partial} prediction error:
\begin{align}
    \label{eq:deltai}
    &\ \Delta_t^{(i)} \doteq \|G_{t}^{(i)} - \ti{ G}_{t}^{(i)} \|
\end{align}
We highlight that the prediction error magnitude decreases exponentially with $i$: $\Delta_t^{(i)}\!\!\propto\! (1-\delta)^i \leq e^{-i}$. This follows from \eqref{eq:grad-mags} and the fact that predicting $\ti{G}_{t}^{(i)}, i\geq1$ amounts to predicting $\vec \alpha_t$ and attenuating it by a multiplication with $A^{i-1}$. We refer to $i$ therefore as the attenuation level.

Similarly, we define ${G}_{t}$ as the matrix of partial derivatives of $F_t(\cdot)$, and ${\ti G}_{t}$ as its prediction.
Lastly, we define the hybrid hint matrix $H_t$, which aims to approximate the sum $G_{t-d:t}$
\begin{equation}
    \label{eq:hint-matrix}
    \!\!\!\!H_t\!\doteq\!\underbrace{\!\sum _{i=0}^{d-1}\! \bigg(\sum_{j=0}^{\substack{d-i-1}}\!\!G_{t-d+i+j}^{(j)}}_{\text{observed at $t$}} 
    \ \!\!+\!\!\!\!
    \underbrace{\sum_{j'=d-i}^d\!\!\! \ti{G}_{t-d+i+j'}^{(j')}\!\bigg) \mathds{1}_{d\geq1}\!+\! \ti{G}_{t}}_{\text{future predictions}} ,
\end{equation}
and we denote by $\Delta_t$ the prediction error  $\Delta_t\doteq\|G_{t-d:t} - H_{t}\|$. Due to the definition of $F_t(\cdot)$, certain elements of the summands (in $G_{t-d:t}$) are partially observed at $t$ and are hence directly used in constructing $H_t$. The remaining elements in the sum are obtained from the prediction oracle.

\begin{figure*}
      \centering
      \includegraphics[width=0.85\textwidth]{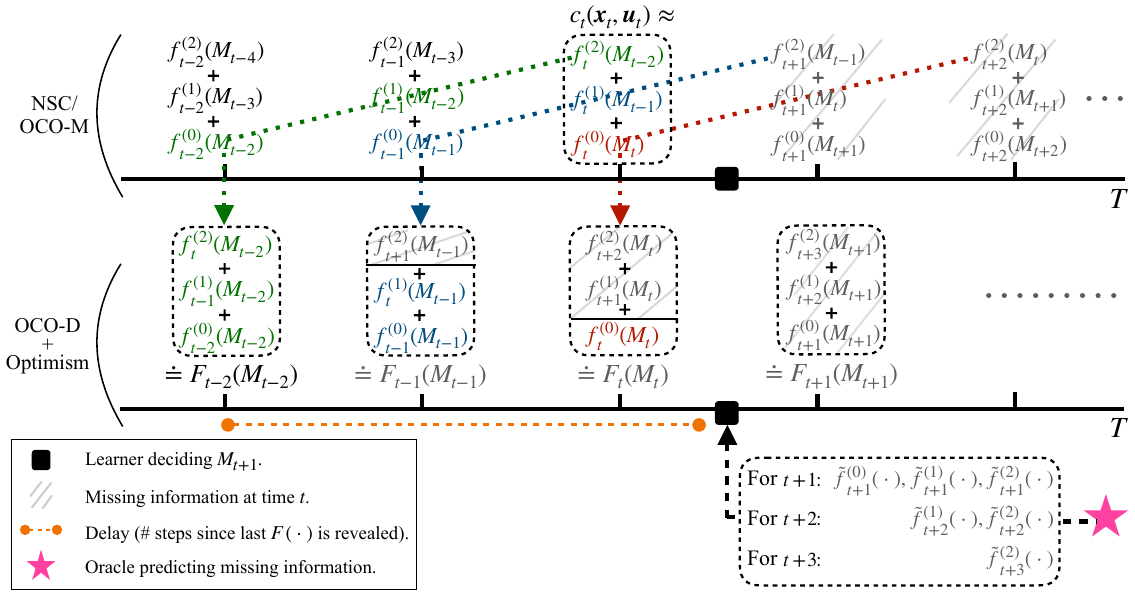}
      \caption{The methodology of designing \texttt{OptFTRL-C}. Up: The NSC to OCO-M reduction, with parameter $d=2$ (Sec. \ref{subsec:A}). Down: an equivalent \emph{delayed OCO} formulation, obtained via rearrangement, which we append with an oracle (Sec. \ref{subsec:B}).}
    \label{fig:method}
    \vspace{-0.3cm}
\end{figure*}

With these definitions at hand, we can now introduce the main algorithmic step. We propose an algorithm (\texttt{OptFTRL-C}) for optimizing the policy parameters $M_t$, $t\in [T]$. The algorithm uses the update formula:
\begin{equation}
    M_{t+1} = \argmin_{M\in\mathcal{M}} \left\{\dtp{ G_{1:t-d}+ H_{t+1}} {M} +  r_{t+1}(M)\right\},
    \label{eq:update_step}
\end{equation}
where $r_t(\cdot)$ are strongly convex regularizers defined as:
\begin{align}
    &r_{t+1}(M) = \frac{\lambda_{t+1}}{2} \|M\|^2,
    \\
    &\lambda_{t+1} = \frac{4}{\kappa_M}\max_{j\leq t-d-1}\!\!\Delta_{j-d+1:j} + \frac{\sqrt{5}}{\kappa_M} \sqrt{\sum_{i=1}^{t-d} \Delta_{i}^2 } \label{eq:lambda-update}.
\end{align}
In essence, the regularizer is a $\lambda_{t+1}$-strongly convex function, where $\lambda_{t+1}$ is proportional to the observed prediction error up to $t$. At each $t$, we set the strong convexity to be the maximum sum of $d$ consecutive witnessed errors, added to the root of the accumulated squared witnessed errors. This later term is aligned with memoryless OFTRL, whereas the former is necessary to adjust for the memory/delay effect. 

\begin{algorithm}[t]
	\caption{\mbox{\hspace{-.05mm}\texttt{OptFTRL-C}}}
	\begin{algorithmic}[1]
		\REQUIRE System $(A,B)$, parameter $d$, DAC parameters $\kappa_M, p$. 
  		\ENSURE Actions $\vec{u}_t$ at each slot $t=1,\ldots, T$.
		
		\FOR {each time slot $t = 1, \dots, T$}
        \STATE Use action $\vec{u}_t=\sum_{j=1}^p M_t^{[j]}\ \vec{w}_{t-j}$
            \STATE Observe $c_t(\vec{x_t}, \vec{u_t})$ and record the gradient  $G_{t-d}$
            \STATE Observe new state $\vec{x_{t+1}}$ and record $\vec w_t$
            \STATE Calculate $\Delta_{t-d}$ and update  parameter $\lambda_{t+1}$ via \eqref{eq:lambda-update}
            \STATE Receive future predictions $\{c_{\tau}(\cdot, \cdot)\}_{\tau=t+1}^{t+d}$
            \STATE Construct $H_{t+1}$ as in \eqref{eq:hint-matrix}
            \STATE  Calculate ${M}_{t+1}$ via \eqref{eq:update_step}
		\ENDFOR
	\end{algorithmic}\label{alg:OptFTRL-C}
\end{algorithm}

The steps of the \texttt{OptFTRL-C} routine are outlined in Algorithm \ref{alg:OptFTRL-C}.
\texttt{OptFTRL-C} first executes an action $\vec{u}_t$ (line $2$). Then, the cost function is revealed, completing the necessary information to compute $G_{t-d}$ (line $3$, recall that all the costs from $t-d,\dots,t$ are required to know $G_{t-d}$). The system then transitions to state $\Vec{x}_{t+1}$, effectively revealing the disturbance vector $\Vec{w}_t$ (line $4$). At this point, we can calculate $\Delta_{t-d}$ (since we know the ground truth $G_{t-d}$) and update the strong convexity parameter accordingly (line $5$). Then, the oracle forecasts the next $d$ costs functions (line $6$), enabling us to construct the hybrid hint matrix (line $7$). Finally, the next action $\vec{u}_{t+1}$ is committed through updating the policy parameters (line $8$). The  regret of this routine is characterized in the following theorem:

\begin{theorem}
\textit{Let $(A,B)$ be an LTI system, and $\{c_t(\cdot,\cdot)\}_{t=1}^T$, $\{\vec{w}\}_{t=1}^T$ be any sequence of stage costs and disturbances, respectively. Let $\Delta^{(i)}_t$ be the prediction error at $t$ with attenuation level $i$, as defined in \eqref{eq:deltai}. Then, with memory parameter $d$ defined as in Lemma \ref{lem:approimation}, and under Assumptions $1$ and $2$, algorithm \texttt{OptFTRL-C} produces actions $\{\vec u_t\}_{t=1}^T$ such that for all $T$, the following holds:}
\label{thm:main}
\begin{equation}
    \mathcal{R}_T = \mathcal{O}\left(\!\sqrt{\sumT\!\bigg(\!\sum_{i=0}^d \sum_{j=i}^d \!\Delta^{\!(d-j+i)}_{t+i}\!\!\bigg)^{\!\!2}}\ \right). \notag
    \vspace{2mm}
\end{equation}
\end{theorem}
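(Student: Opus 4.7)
My strategy is a two-level reduction followed by an optimism-with-delay regret calculation. By the linearity of costs (Assumption 2) and the state formula \eqref{eq:state-nonstat-m}, $c_t(\vec{x}_t, \vec{u}_t) = \sum_{i=0}^{t-1} f^{(i)}_t(M_{t-i})$; truncating at $i = d$ and invoking Lemma \ref{lem:approimation} (the $(1-\delta)^d$ tail is $\mathcal{O}(1/T)$ once $d = \Theta(\log T)$) gives $\sumT c_t(\vec{x}_t, \vec{u}_t) \approx \sum_s F_s(M_s)$, where $F_s(M) = \dtp{G_s}{M}$ plus an $M$-independent constant and $G_s = \sum_{i=0}^d G^{(i)}_{s+i}$. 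The same argument applied to a stationary benchmark $M^\star$ yields $\sumT c_t(\vec{x}_t(M^\star), \vec{u}_t(M^\star)) \approx \sum_s F_s(M^\star)$, so the policy regret equals, up to an additive $\mathcal{O}(1)$, the linear OCO regret $\sum_s \dtp{G_s}{M_s - M^\star}$. Crucially, $G_s$ depends on $\vec{\alpha}_{s+d}$ and hence on $c_{s+d}$, so at time $s+1$ only the prefix $G_{1:s-d}$ is fully revealed; the update \eqref{eq:update_step} is then precisely an OFTRL step in a delayed-feedback OCO, with the hybrid hint $H_{s+1}$ approximating the still-unknown cumulative gradient $G_{s-d+1:s+1}$ by combining the partial gradients $G^{(j)}_\tau$ already observed ($\tau\le s$) with oracle forecasts for the rest.

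Next I would apply a standard optimism-with-delay regret lemma (cf. \cite{optimdelay21}) specialized to the quadratic regularizer $r_t$. The per-round contribution decomposes into a prediction-error term of order $\|G_{s-d:s} - H_s\|^2/\lambda_s = \Delta_s^2/\lambda_s$, plus a delay-drift term bounding $\|M_{s+1} - M_{s-d}\|$ against the unknown-gradient magnitude. Stability of OFTRL iterates gives $\|M_{\tau+1} - M_\tau\| \le \Delta_\tau/\lambda_{\tau+1}$, so over the $d$-step window the drift is controlled by $\max_j \Delta_{j-d+1:j}$ times the diameter $\kappa_M$. The regularizer \eqref{eq:lambda-update} is tuned so its $\max$-term exactly cancels the drift contribution while its $\sqrt{\sum \Delta_i^2}$-term is the standard self-confident tuning; telescoping across $t$ then yields a bound of the form $\mathcal{O}\bigl(\sqrt{\sum_t \Delta_t^2}\bigr)$.

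Finally, to match the statement I would unpack $\Delta_t$ in terms of the partial errors $\Delta^{(i)}_t$. Applying the triangle inequality to \eqref{eq:hint-matrix} and regrouping shows $\Delta_t \le \sum_{i=0}^d \sum_{j=i}^d \Delta^{(d-j+i)}_{t+i}$, and substituting this into the $\sqrt{\sum_t \Delta_t^2}$ bound gives the claimed rate. The main obstacle is the regret lemma in the second paragraph: when predictions are accurate, $\lambda_t$ is small and OFTRL iterates may swing freely within $\mathcal{M}$, and the $d$-step delay naively compounds these swings into an extra factor of $d$. Matching the $\max_j \Delta_{j-d+1:j}$ regularization term to the delay drift, while keeping the root term small enough to preserve the $\mathcal{O}(\sqrt{\sum \Delta^2})$ rate, is the delicate balance; the exponential decay of partial-gradient magnitudes in the attenuation level, established in \eqref{eq:grad-mags}, is what keeps the bound from blowing up with $d$.
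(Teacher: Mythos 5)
Your proposal is correct and follows essentially the same route as the paper: reduce the linearized NSC regret to a separable OCO-with-memory problem via Lemma \ref{lem:approimation}, rearrange it into a delayed OCO problem over the forward costs $F_t$ (the paper's Lemma \ref{lem:equivalence}), invoke the optimism-with-delay bound of \cite[Thm.~11]{optimdelay21} with the hybrid hint $H_t$ and the regularizer tuning \eqref{eq:lambda-update}, and finally unpack $\Delta_t \le \sum_{i=0}^d\sum_{j=i}^d \Delta^{(d-j+i)}_{t+i}$ by the triangle inequality. The only difference is presentational: you sketch the internals of the delay-drift/prediction-error tradeoff that the paper treats as a black box via the cited theorem.
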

\textbf{Discussion.} \texttt{OptFTRL-C} achieves the sought-after accuracy-modulated bound that holds for systems with memory. It generalizes previous optimistic online learning bounds by incorporating memory (hence states), and generalizes previous online non-stochastic control bounds by handling predictions of unknown quality. Namely, \texttt{OptFTRL-C}'s bound has the following characteristics.

\underline{\textit{Prediction-commensurate}}: in the \emph{best} case predictions ($\Delta_{t}^{(i)} = 0$, $\forall t$), the bound collapses to $\mathcal{O}(1)$, which is \emph{constant}. On ther other hand, in the \emph{worst} case, we get
\begin{align}
    &\!\sum_{i=0}^d \sum_{j=i}^d \!\Delta^{\!(d-j+i)}_{t+i} \leq 2\!\sum_{i=0}^d \sum_{j=i}^d \! g^{(d-j+i)} 
    = 2 \sum_{i=0}^d \sum_{k=i}^d g^{(k)}= \notag
    \\
    &2\!\sum_{k=0}^d \sum_{i=0}^k\!g^{(k)}= 2\!\sum_{k=0}^d (k\!+\!1) g^{(k)} = 2g^{(0)}\!\!+\! 2\!\sum_{k=1}^d (k\!+\!1) g^{(k)}  \notag 
    \\
    &\leq 2\beta pw + \frac{2\alpha\kappa_Bpw}{\delta^2} \doteq m \notag
\end{align}
where the first equality follows from the triangular inequality and \eqref{eq:grad-mags}, and in the last inequality we used $\sum_{i=0}^\infty\! i(1\!-\!\delta)^i\!\leq\!\nicefrac{1\!-\!\delta}{\delta^2}$ and $\sum_{i=0}^\infty (1\!-\!\delta)^i\!\!\leq\!\nicefrac{1}{\delta}$.
Hence, the regret becomes $\mathcal{O}(m\sqrt{T})$, which is order-optimal in $T$ \cite[Sec 5.1]{orabona2021modern}, achieving the optimistic premise.

\underline{\textit{Memory-commensurate}}:
Apart from prediction adaptivity, \texttt{OptFTRL-C}'s performance is interpretable with respect to the spectrum of stateless to stateful systems. Consider the stateless case ($\vec{x}_t = \vec 0, \forall t)$. Hence, $c_{t}(\vec{x}_t, \vec{u}_t) = \dtp{\vec \beta_t}{\vec{u_t}}$. In this case,  \texttt{OptFTRL-C} requires predictions only for the next step (as per \eqref{eq:hint-matrix}), The resulting bound becomes $\mathcal{O}((\sumT \Delta^{0}_t)^{\nicefrac{1}{2}}) = \mathcal{O}((\sum_{t} \|\vec{\beta}_t - \tilde{\vec{\beta}}\|)^{\nicefrac{1}{2}})$,  recovering the optimistic bound for \emph{stateless} online learning \cite{mohri-aistats2016}.

On the other hand, consider a system with a general memory $d$. Then, \texttt{OptFTRL-C} uses predictions not only for the next step, but for the next $d+1$ steps $\{\vec \beta_\tau, \vec{\alpha}_t\}_{\tau=t}^{t+d}$, as dictated by \eqref{eq:hint-matrix}. However, the dependence on future predictions' accuracy decays exponentially (recall that $\Delta_t^{(i)}\propto (1-\delta)^i$). For example, when $d=1$ the resulting bound is $\mathcal{O}\big((\sumT \Delta_{t}^{(0)}+\Delta_{t}^{(1)}+\Delta_{t+1}^{(1)})^{\nicefrac{1}{2}}\big)$. I.e., we pay for the error in $d+1$ predictions\footnote{The fact that earlier errors get repeated is due to the compounding effect.}, but with an exponentially decaying rate.

We now present the tools to prove Theorem \ref{thm:main}. Our proof is structured into two primary parts. First, we demonstrate that the regret in linearized NSC is a specific instance within the OCO-M framework, achieved through a particular selection of separable functions (sub-section \ref{subsec:A}, visualized in Fig. \ref{fig:method} (up)). Second, we establish that the regret of OCO-M with these separable functions is in turn a particular case within the Delayed OCO (OCO-D) framework with a specific structure of delay (sub-section \ref{subsec:B}, visualized in Fig. \ref{fig:method} (down)). While the first part is fairly standard in NSC, we do not reduce its resulting OCO-M instance to standard OCO,  but to OCO-D instead. The connection to delayed online learning was indeed identified in \cite{foster20b-log2}, but incorporating predictions (of unknown quality) was not considered.
\subsection{NSC with linearized costs is separable OCO-M.}
\label{subsec:A}
In this subsection, we show how the cost at each time slot $t$ can be approximated by the sum of a finite number of functions of only the past $d$ decisions. Formally:
\begin{lemma}
\label{lem:approimation}
Given $d \geq \frac{1}{\delta} \log(\frac{z}{\delta\epsilon}T)$, $z\doteq w(\kappa_B \kappa_Mp+1)$.
Then, under Assumptions $1 \& 2$, for any $\epsilon>0$:
\begin{align}
    \sumT\bigg| c_t(\vec{x}_t,\vec{u}_t) -  \sum_{i=0}^{d} f_t^{(i)}(M_{t-i})\bigg| \leq  \alpha\epsilon.
    \notag
\end{align}
\begin{proof}Define the counterfactual state $\hat{\vec{x}}_t$ as the state reached starting from $\vec{0}$ and then executing $d$ DAC actions based on policies $M_{t-d}, M_{t-d+1}, \dots, M_{t-1}$. In other words, this is the state reached at $t$ by following the learner's policies but assuming that $\vec{x}_{t-d-1}$ was $\vec 0$. From \eqref{eq:state-nonstat-m}: 
\begin{align}
    \hat{\vec x}_t = \sum _{i=0}^{d-1} A^i \big(BM_{t-i-1}\ \vec {\overline{w}}_{t-i-2} + \vec w_{t-i-1}\big).
    \label{eq:x-hat}
\end{align}
For now, assume that 
$\!\|\hat{\vec x_t}\!-\! \vec{x}_t \|\!\leq\!\frac{\epsilon}{T}$. Then, by Assump. 2:
\begin{align}
    \sumT| c_t(\vec{x}_t,\vec{u}_t) -  c_t(\vec{\hat x}_t,\vec{ u}_t)| \leq \|\vec{\alpha}_t\|\ \|\hat{\vec x_t}\!-\! \vec{x}_t \| = \alpha\epsilon  .
\end{align}
However, $c_t(\vec{\hat x}_t,\vec{u}_t)$ can be written in terms of $f_t^{(i)}(\cdot)$ using again Assump. $2$ but with the definitions in \eqref{eq:state-trans} and \eqref{eq:action-trans} : 
\begin{align}
    &c_t(\vec{\hat x}_t,\vec{ u}_t) = \dtp{\vec\alpha_t}{\sum_{i=0}^{d-1}\vec\psi^{i}_{t}(M_{t-i-1})} +\dtp{\vec \beta_t}{\vec \psi_t(M_t)} \notag
    \\
    &=\sum_{i=0}^{d-1} \dtp{\vec\alpha_t}{\vec\psi^{i}_{t}(M_{t-i-1})} + \dtp{\vec \beta_t}{\vec \psi_t(M_t)} \notag
    \\ 
    &= \sum_{i=1}^{d} \dtp{\vec\alpha_t}{\vec\psi^{i-1}_{t}(M_{t-i})} + \dtp{\vec \beta_t}{\vec \psi_t(M_t)} \notag
    \\
    &= \sum_{i=1}^d f^{(i)}_{t}(M_{t-i}) + \dtp{\vec \beta_t}{\vec \psi_t(M_t)} = \sum_{i=0}^d f^{(i)}_{t}(M_{t-i}). \notag
\end{align}
It remains to show that $\|\vec{x}_t - \hat{\vec x_t} \| \leq \frac{\epsilon}{T}$. From \eqref{eq:state-nonstat-m} and \eqref{eq:x-hat}:
\begin{align}
    &\|\hat{\vec x_t} - \vec{x}_t \| \leq\|\sum_{i=d}^{t-1} A^i \big(BM_{t-i-1} \overline{\vec w}_{t-i-2} + \vec w_{t-i-1}\!\big)\| \notag
    \\
    &\leq \sum_{i=d}^{t-1} (\|A^iBM\|_{op}\| \vec{\overline w}_{t-i-2}\| + \|A^i\|_{op} \|\vec w_{t-i-1}\|)\notag
    \\
    &\stackrel{(a)}{\leq} \sum_{i=d}^{t-1}\|A^i\|_{op} \|BM\| \notag\|\vec {\overline w}_{t-i-2}\| + \|A^i\|_{op} \|\vec w_{t-i-1}\|
    \\
    &\stackrel{(b)}{\leq} \sum_{i=d}^{t-1} (1-\delta)^iw(\kappa_B \kappa_Mp+1)\stackrel{(c)}{\leq} z\int_{i=d}^{\infty}\!\!e^{-\delta i}di = \frac{z}{\delta}e^{-\delta d}, \notag
\end{align}
where $(a)$ follow from the sub-multipilicitive property of $\|\cdot\|_{\text{op}}$, and $\|B\|_\text{op}\! \leq \! \|B\|$, $(b)$ by Assump. $1$, and the bounds on matrix norms, and $(c)$ from $1-x\leq e^{-x}$ and the definition of $z$. Substituting $d$, makes the last term $\frac{\epsilon}{T}$.
\end{proof}
\end{lemma}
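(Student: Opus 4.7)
The plan is to construct a truncated counterfactual state $\hat{\vec x}_t$ that depends only on the last $d$ policies $\{M_{t-d},\dots,M_{t-1}\}$, show it is exponentially close to the true state $\vec x_t$ in $d$, and then use $l$-Lipschitzness of $c_t$ together with Assumption~2 to translate this state approximation into the claimed $\sum_{i=0}^d f_t^{(i)}(M_{t-i})$ decomposition. The overall error budget $l\epsilon$ is then distributed as $l\epsilon/T$ per slot, which fixes the required $d$.

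First, starting from the unrolling in \eqref{eq:state-nonstat-m}, I would define $\hat{\vec x}_t$ as the state reached by starting from $\vec 0$ at slot $t-d$ and then executing the learner's last $d$ policies; this is precisely \eqref{eq:state-nonstat-m} with the summation truncated to $i\in\{0,\dots,d-1\}$. Consequently, $\vec x_t - \hat{\vec x}_t$ equals the tail $\sum_{i=d}^{t-1} A^i(BM_{t-i-1}\overline{\vec w}_{t-i-2} + \vec w_{t-i-1})$. Applying the triangle inequality, sub-multiplicativity of the operator norm, the spectral bound $\|A\|_{\text{op}}\le 1-\delta$ (Assumption~1), and the magnitude bounds $\|B\|\le\kappa_B$, $\|M\|\le\kappa_M$, $\|\vec w_s\|\le w$, $\|\overline{\vec w}_s\|\le pw$, this tail is bounded by $z\sum_{i=d}^{\infty}(1-\delta)^i$ with $z=w(\kappa_B\kappa_M p+1)$. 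Using $1-\delta\le e^{-\delta}$ and comparing the series to $\int_d^\infty e^{-\delta i}\,di$ yields $\|\vec x_t - \hat{\vec x}_t\|\le (z/\delta)e^{-\delta d}$, and the lemma's choice $d\ge (1/\delta)\log(zT/(\delta\epsilon))$ makes this at most $\epsilon/T$.

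Second, by $l$-Lipschitzness of $c_t$, I obtain $|c_t(\vec x_t,\vec u_t)-c_t(\hat{\vec x}_t,\vec u_t)|\le l\epsilon/T$, so summing over $t$ gives $l\epsilon$, which is the target. It remains to identify $c_t(\hat{\vec x}_t,\vec u_t)$ with $\sum_{i=0}^d f_t^{(i)}(M_{t-i})$. This is the one place where linearity (Assumption~2) is essential: writing $c_t(\hat{\vec x}_t,\vec u_t)=\dtp{\vec\alpha_t}{\hat{\vec x}_t}+\dtp{\vec\beta_t}{\vec u_t}$, expanding $\hat{\vec x}_t$ via the truncated unrolling and $\vec u_t$ via \eqref{eq:dac-action}, and matching summands against $\vec\psi^{i}_t$ and $\vec\psi_t$ from \eqref{eq:state-trans}, \eqref{eq:action-trans} gives $\dtp{\vec\alpha_t}{\vec\psi^{i-1}_t(M_{t-i})}=f_t^{(i)}(M_{t-i})$ for $i\ge 1$ after the index shift, while $\dtp{\vec\beta_t}{\vec\psi_t(M_t)}=f_t^{(0)}(M_t)$; summing over $i=0,\dots,d$ gives exactly the claimed identity.

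The main obstacle I foresee is the tail-bound accounting in the second step: keeping track of the sub-multiplicative split between $\|A^iBM\|_{\text{op}}$ and $\|\overline{\vec w}\|$, picking up the $p$ factor from the stacked disturbance vector, and verifying that the geometric-to-integral comparison produces the exact constant $z/\delta$ that the chosen $d$ is designed to cancel. Once these constants align, the cost-side decomposition is essentially bookkeeping driven by linearity, and no further analytic machinery is needed.
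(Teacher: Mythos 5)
Your proposal is correct and follows essentially the same route as the paper's proof: truncate the state unrolling at depth $d$ to define $\hat{\vec x}_t$, bound the tail by $(z/\delta)e^{-\delta d}\le \epsilon/T$ via the spectral bound and a geometric-to-integral comparison, apply Lipschitzness per slot, and use linearity to match $c_t(\hat{\vec x}_t,\vec u_t)$ term-by-term with the $f_t^{(i)}$ via the $\vec\psi$ maps and an index shift. You even correctly attribute the linearity step to Assumption~2 (the paper's proof text cites Assumption~1 there, which appears to be a typo).
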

Thus, the closeness of $\vec x_t$ and $\vec{\hat x}_t$ is due to the (strong) stability. Lastly, we note the possibility to set $d$ adaptively without knowledge of $T$ using $d = \frac{1}{\delta} \log(\frac{zt}{\delta\epsilon})$.\footnote{This would result in a sum of the form $\sum_t \nicefrac{1}{t}\leq \log (T)$.}

\subsection{separable OCO-M is OCO-D}
Now that we have approximated the cost at $t$ by a separable function with memory, we show in this subsection that the separated functions can be rearranged to represent an equivalent OCO with delayed feedback formulation. 
\label{subsec:B}
\begin{lemma}
\label{lem:equivalence}
Let $f_t(M_0,\dots,M_d)$ be a separable function: $ f_t(M_0,\dots,M_d) = \sum_{i=0}^d f_{t,i}(M_{d-i})$, $f_{t,i}(M): \mathbb{R}^{(d_u\times d_x p)}\mapsto \mathbb R$ . 
Let $\mathcal{A}$ be an online learning algorithm  whose decisions $M_{t+1}$ depend on the history set $\mathbb{H}_t\doteq \cup_{i=0}^{d} \{ f_{\tau,i}(\cdot)\}_{\tau=1}^t$. Define $J_t(M) \doteq \sum_{i=0}^d f_{t+i,i}(M)$, and the history set w.r.t. $J_t(\cdot)$ as $\mathbb{H}^{J}_t$ Then,
\begin{align}
    & \mathbb{H}^{J}_t = \{J_{\tau}(\cdot)\}_{\tau=1}^{t-d}\label{eq:lemma2-a} \text{, and} 
    \\
    & \sum_{t=1}^{T} f_{t}( M_{t-d},\dots, M_t)= \sum_{t=1}^T J_t(M_t). \label{eq:lemma2-b}
\end{align}
\eqref{eq:lemma2-b} means that the accumulated cost, with memory, is equivalent to that of the memoryless functions $J_t(\cdot)$. However, from \eqref{eq:lemma2-a} $\mathcal{A}$ has delayed feedback w.r.t. $J_t(\cdot)$; when deciding $M_{t+1}$, feedback up to only $t-d$ is available.
\end{lemma}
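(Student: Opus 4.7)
The plan is to prove the two assertions separately; both follow from unpacking the definitions and performing a careful re-indexing of double sums.

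For the history identity \eqref{eq:lemma2-a}, I would argue directly. Constructing $J_\tau(\cdot) = \sum_{i=0}^d f_{\tau+i,i}(\cdot)$ requires the $d+1$ components $f_{\tau,0},f_{\tau+1,1},\dots,f_{\tau+d,d}$. The latest of these is $f_{\tau+d,d}$, which lies in $\mathbb{H}_t$ if and only if $\tau + d \leq t$, i.e., $\tau \leq t-d$; the earlier components $f_{\tau+i,i}$ for $i<d$ are then automatically available since their first index is smaller. Hence exactly $\{J_\tau(\cdot)\}_{\tau=1}^{t-d}$ is reconstructible from $\mathbb{H}_t$, giving \eqref{eq:lemma2-a}.

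For the sum identity \eqref{eq:lemma2-b}, my plan is to expand both sides via separability and match them by a change of summation variable. The LHS becomes $\sum_{t=1}^T\sum_{i=0}^d f_{t,i}(M_{t-i})$, where $i$ is read as the ``lag'', consistent with the decomposition $\sum_i f_t^{(i)}(M_{t-i})$ in Lemma \ref{lem:approimation}; the RHS unfolds as $\sum_{t=1}^T\sum_{i=0}^d f_{t+i,i}(M_t)$. Setting $s := t-i$ (equivalently, $t = s+i$) on the LHS pairs each summand $f_{t,i}(M_{t-i})$ with $f_{s+i,i}(M_s)$. Summing over $s$ first and then over $i$, the $d+1$ summands $\{f_{s+i,i}(M_s)\}_{i=0}^d$ collapse precisely into $J_s(M_s)$, matching the RHS term-by-term.

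The step I expect to require the most care is the boundary: the shift $s=t-i$ alters the admissible range, so for $s$ close to $1$ or $T$ some pairs $(s+i,i)$ fall outside $\{1,\dots,T\}$. I would handle this with the standard zero-padding convention, namely $f_{\tau,i}\equiv 0$ for $\tau\notin\{1,\dots,T\}$ together with $M_\tau = \vec 0$ for $\tau\leq 0$, the latter being consistent with the initialization $\vec x_1 = \vec 0$ already used in the proof of Lemma \ref{lem:approimation}. Under this convention the residues at the endpoints either vanish or contribute an $\mathcal{O}(d)$ additive constant that is absorbed by the $\mathcal{O}(\sqrt{T})$ regret bound of Theorem \ref{thm:main}, so the identity holds in the exact form needed for the downstream OCO-with-delay reduction.
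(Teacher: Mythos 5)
Your proposal is correct and follows essentially the same route as the paper: part (a) by observing that $J_\tau$ needs $f_{\tau+d,d}$ and hence is available only for $\tau\leq t-d$, and part (b) by the index shift $s=t-i$ together with the zero-padding conventions $f_{\tau,i}\equiv 0$ for $\tau>T$ and vanishing terms for decision indices below $1$. The paper's proof is the same re-indexing argument, stated more tersely.
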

\begin{proof}
the first part is immediate from the definition of $J_\tau(\cdot)$; any $J_\tau(\cdot)$ with $\tau>t-d$ would require a function that is not in the original history set $\mathbb{H}_t$ (not known at $t$).

The second part is mainly index manipulation:
\begin{align}
    &\sum_{t=1}^{T} f_{t}(M_{t-d},\dots, M_t) = 
     \sum_{t=1}^{T} \sum_{i=0}^{d} f_{t,i}( M_{t-i})
    \\
    &= \sum_{i=0}^{d} \sum_{t=1}^{T-i} f_{t+i,i}(M_{t}) 
    = \sum_{t=1}^{T}\sum_{i=0}^{d}f_{t+i,i}(M_{t}) =\sumT J_{t}(M_{t}) \notag
\end{align}
Where the first equality holds by separability, the second by shifting the sum index and using the convention $f_{t<d}(\cdot) \doteq 0$ w.l.o.g,\footnote{The adversarial rounds can always be prefixed with zero cost rounds. Alternatively, redefine regret to start from  $t=d$ as in see \cite[Sec 2.2]{NIPS2015_38913e1d}. } and the third by using $f_{t>T}(\cdot) \doteq 0$.\end{proof}
Now we are ready to prove Theorem \ref{thm:main}: \begin{proof}
Denote with $\pi^\star$ the cost-minimizing policy, and let $M^\star$ be its parametrization. Then, from \eqref{eq:policy-regret}:
\begin{align}
    \mathcal{R}_T &= \sumT c_t(\vec{x}_t,\vec{ u}_t) - c_t(\vec{x}_t(\pi^\star),\vec{ u}_t(\pi^\star))   \notag
    \\
    &\stackrel{(a)}{\leq} \sumT \big( \sum_{i=0}^{d} f_{t}^{(i)}(M_{t-i}) - \sum_{i=0}^{d} f_{t}^{(i)}(M^\star) \big) + 2 \alpha\epsilon \notag
    \\
    &\stackrel{(b)}{=} \sumT f_t(M_{t-d},\dots,M_t) - f_t(M^\star,\dots,M^\star) + 2 \alpha\epsilon
    \\
    &\stackrel{(c)}{=} \underbrace{\sumT F_t(M_t) - F_t(M^\star)}_{\doteq \mathcal{R}_T^F}\ +\ 2 \alpha\epsilon,  \label{eq:to-use-optimdelay}
\end{align}
where $(a)$ follows by Lemma \ref{lem:approimation}, which gives both an upper bound on the learner cost and lower bound on the benchmark's cost, $(b)$ by writing the sum of $f_t^{(i)}(\cdot)$ as a single function with memory, and $(c)$ by Lemma \ref{lem:equivalence}, with $f_{t,i}(M) = f_t^{(i)}(M)$, and hence, $J_t(M) = F_t(M)$. To bound the  delayed feedback regret $\mathcal{R}_T^F$, we use \cite[Thm. 11]{optimdelay21},  which we restate below using the notation of this paper:\footnote{Mapping their notation to ours, we get $\vec{g}_{t-D:t} = G_{t-d:t}$, $\vec{h}_t = H_t$, $\vec g_{t-D:t} - h_t = \Delta_t$, $\vec{a}_{t,F} = 2\kappa_M \Delta_t$, $\vec{b}_{t,F}=\nicefrac{1}{2}\Delta^2_t$, and $\alpha = \kappa_M^2$.}

\begin{figure*}
     \centering
     \begin{subfigure}[b]{0.3\textwidth}
         \centering
         \includegraphics[width=\textwidth]{./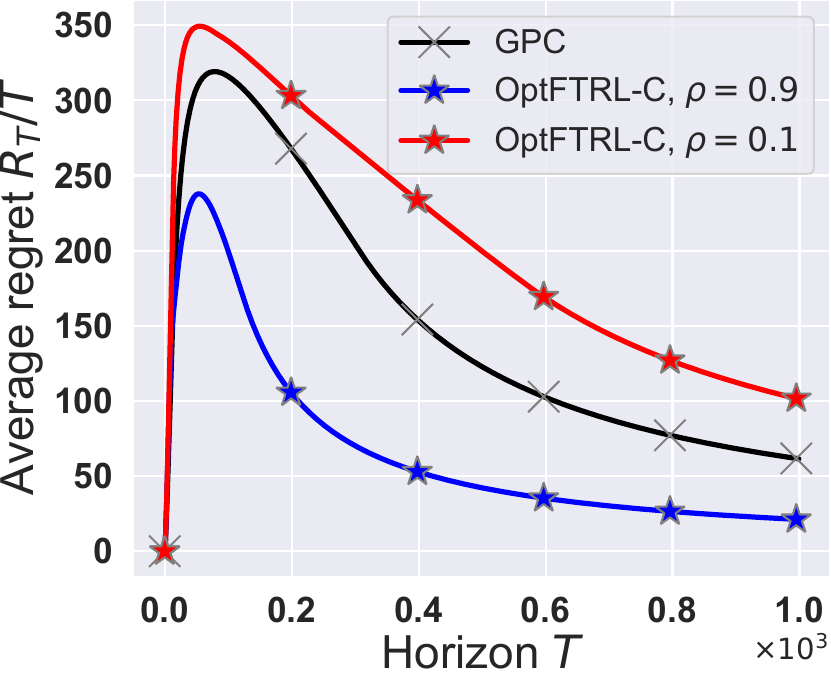}
         \caption{\footnotesize{Stationary costs.}}
         \label{fig:ne_a}
     \end{subfigure}
     \hfill
     \begin{subfigure}[b]{0.31\textwidth}
         \centering
         \includegraphics[width=\textwidth]{./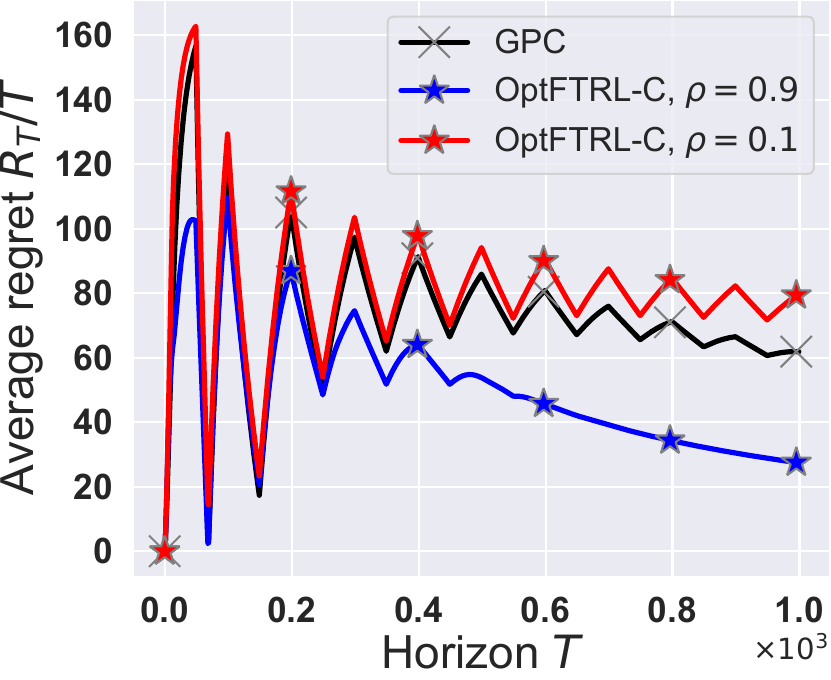}
         \caption{\footnotesize{Alternating costs.}}
         \label{fig:es_b}
     \end{subfigure}
     \hfill
    \begin{subfigure}[b]{0.31\textwidth}
         \includegraphics[width=\textwidth]{./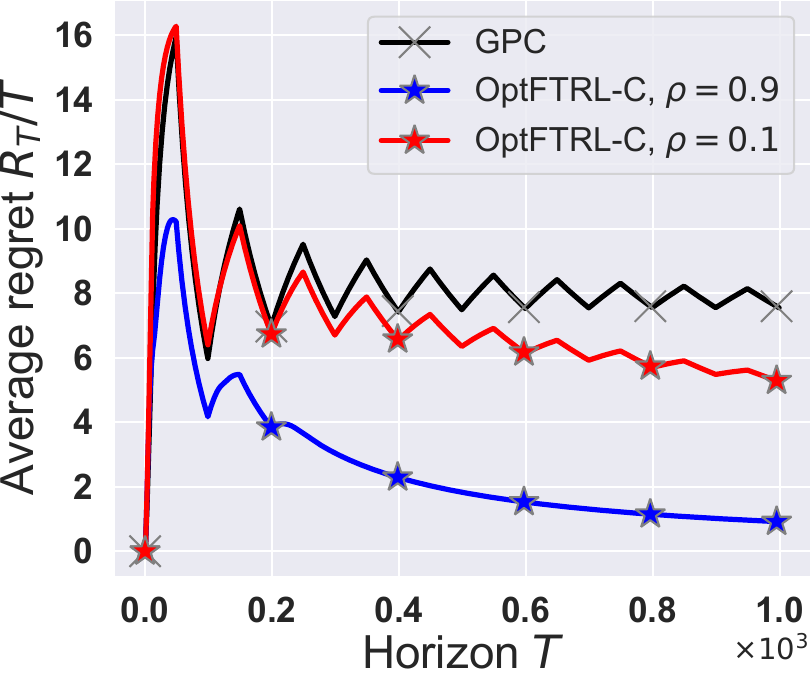}
         \caption{\footnotesize{Alternating costs, low magnitude.}}
         \label{fig:es_c}
    \end{subfigure}
    \caption{The average regret against the optimal policy under various scenarios (cost and disturbances trajectories).}
    \vspace{-0.2cm}
\end{figure*}

\begin{theorem} [\text{\cite[Thm. 11]{optimdelay21}}] 
Let $\lambda_t$ be non-decreasing on $t$ defined as in \eqref{eq:lambda-update}. Let $r_t(\cdot)$ be $\lambda_t$ strongly convex regularizer. Then, $\{M_t\}_t$ computed via \eqref{eq:update_step} ensures that: 
\begin{align}
    \mathcal{R}_T^{F}\leq 8\kappa_M\max_{t\in[T]}\Delta_{t-d:t-1} + 2\sqrt{5}\kappa_M\sqrt{\sumT \Delta_t^2} \label{eq:optim-delay-result}.
\end{align}
\vspace{1mm}
\end{theorem}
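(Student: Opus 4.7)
The plan is to adapt the standard Optimistic FTRL (OFTRL) regret analysis to the delayed-feedback setting induced by \eqref{eq:update_step}. The key observation is that the iterate $M_{t+1}$ is produced from the cumulative true gradient $G_{1:t-d}$ plus the hybrid hint $H_{t+1}$, so we can equivalently view one step as OFTRL processing the block-gradient $G_{t-d:t}$ with hint $H_t$ and prediction error $\Delta_t=\|G_{t-d:t}-H_t\|$. Under this re-indexing, $\mathcal{R}^F_T$ becomes a block-OFTRL regret and the standard OFTRL machinery applies, provided the delay-induced mismatch between ``what the algorithm used'' and ``what it would use with fresh feedback'' is tracked separately.

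First I would invoke the Be-the-Regularized-Leader inequality against a fixed comparator $M^\star$ to obtain a decomposition of the form
\begin{align}
\mathcal{R}_T^{F} \le r_T(M^\star) + \sum_{t=1}^{T}\bigl\langle G_{t-d:t}-H_t,\ M_t-\widehat{M}_{t+1}\bigr\rangle - \sum_{t=1}^{T}\tfrac{\lambda_t}{2}\bigl\|M_t-\widehat{M}_{t+1}\bigr\|^2, \notag
\end{align}
where $\widehat{M}_{t+1}$ is a virtual iterate that would have emerged had feedback up to slot $t$ (not $t-d$) been available. Cauchy--Schwarz plus AM--GM then majorizes each summand by $\Delta_t^{2}/(2\lambda_t)$; substituting the adaptive $\lambda_t$ from \eqref{eq:lambda-update} and using $r_T(M^\star)\le \tfrac{\lambda_T}{2}\kappa_M^2$ together with the familiar summation lemma $\sum_t 1/\sqrt{\sum_{i\le t}\Delta_i^2}\le 2\sqrt{\sum_t\Delta_t^2}$ yields the $2\sqrt{5}\,\kappa_M\sqrt{\sumT\Delta_t^2}$ piece up to constants.

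Second, I would account for the gap between the virtual $\widehat{M}_{t+1}$ and the actually-played $M_{t+1}$, since the real algorithm is denied the gradients $G_{t-d+1:t}$ when it commits. Strong convexity of $r_{t+1}$ with parameter $\lambda_{t+1}$ gives $\|M_{t+1}-\widehat{M}_{t+1}\|\le \|G_{t-d+1:t}-(H_{t+1}\text{ restricted to that block})\|/\lambda_{t+1}$, and propagating this perturbation into the linear term yields a contribution controlled by $\max_{j}\Delta_{j-d+1:j}$. The first summand of $\lambda_t$ in \eqref{eq:lambda-update} is precisely calibrated so that this delay-perturbation stays bounded by $8\kappa_M\max_{t}\Delta_{t-d:t-1}$.

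The main obstacle I anticipate is cleanly disentangling the ``optimism'' error (the cumulative $\Delta_t^2$ term) from the ``delay'' error (a maximum over length-$d$ windows of the $\Delta_{t-d:t-1}$'s), and verifying that the two-piece definition of $\lambda_t$ in \eqref{eq:lambda-update} is simultaneously large enough to absorb both while remaining small enough to keep $r_T(M^\star)$ of the desired $\sqrt{\sum_t\Delta_t^2}$ order. This will require a block-wise telescoping over length-$d$ windows of missing feedback to show that a single $\max$-term (rather than a sum over windows) suffices; this is the technically delicate step and is exactly where the constants $8$ and $2\sqrt{5}$ in the bound are pinned down.
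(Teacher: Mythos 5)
First, a point of reference: the paper does not prove this statement at all. It is imported verbatim from the cited work on optimism with delay, and the paper's sole contribution here is the notation dictionary in the footnote ($\vec{g}_{t-D:t}\to G_{t-d:t}$, $a_{t,F}=2\kappa_M\Delta_t$, $b_{t,F}=\tfrac{1}{2}\Delta_t^2$, $\alpha=\kappa_M^2$). Measured against the source's argument, the first half of your sketch is on target: the delayed update \eqref{eq:update_step} is \emph{exactly} a non-delayed optimistic FTRL step on the re-indexed losses $F_t$, with the missing gradients absorbed into an effective hint whose error is $\|G_{t-d:t}-H_t\|=\Delta_t$; the be-the-leader decomposition plus Cauchy--Schwarz/AM--GM then yields per-round terms of the form $\min\{2\kappa_M\Delta_t,\ \Delta_t^2/(2\lambda_t)\}$. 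This is the same reduction the source performs.

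The gap is in how you manufacture the two terms of the final bound. (i) You attribute the $8\kappa_M\max_t\Delta_{t-d:t-1}$ term to the drift between a virtual fresh-feedback iterate $\widehat M_{t+1}$ and the played $M_{t+1}$, bounded per round via strong convexity. But that drift is incurred at \emph{every} round, so propagating it into the linear term produces a sum $\sum_t(\cdot)/\lambda_t$ over all $T$ rounds, not a single max; nothing in your sketch collapses this sum, and you explicitly defer exactly this step. In the source, the max term does not come from iterate drift at all: it comes from an induction on the tuning sequence. Because $\lambda_{t+1}$ in \eqref{eq:lambda-update} may only use $\Delta_1,\dots,\Delta_{t-d}$, the most recent $d+1$ rounds' errors are not yet reflected in $\lambda_t$; those rounds are charged to the linear fallback $a_{t,F}=2\kappa_M\Delta_t$ rather than to $\Delta_t^2/\lambda_t$, and the induction bounds $\lambda_{T+1}\kappa_M^2+\sum_t\min\{a_{t,F},\,b_{t,F}/\lambda_t\}$ by a constant multiple of $\max_t a_{t-d:t-1,F}$ plus $\sqrt{\sum_t\left(a_{t,F}^2+2\kappa_M^2 b_{t,F}\right)}=\sqrt{5}\,\kappa_M\sqrt{\sum_t\Delta_t^2}$ --- which is precisely where the constants $8$ and $2\sqrt{5}$, and the two-piece form of \eqref{eq:lambda-update}, come from. (ii) Relatedly, the ``familiar summation lemma'' cannot be applied as you state it: the denominator $\lambda_t$ lags the numerator $\Delta_t^2$ by $d+1$ steps, whereas $\sum_t\Delta_t^2/\sqrt{\sum_{i\le t}\Delta_i^2}\le 2\sqrt{\sum_t\Delta_t^2}$ requires the running sum in the denominator to include $\Delta_t$ itself. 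You correctly identify this as the delicate point, but the mechanism you propose to resolve it (block-wise telescoping of iterate drift) is not the one that closes the argument; the delayed-upper-bound tuning induction is.
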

To write $\Delta_t$ explicitly, note that  $G_{t-d:t}$ can be written as:
\begin{align}
    & G_{t-d:t} = \sum_{i=0}^{d} \sum_{j=0}^{d} G^{(j)}_{t-d+i+j}
\end{align}
hence we get that $\Delta_t = \| G_{t-d:t} - H_t \| \stackrel{\eqref{eq:hint-matrix}}{\leq} $  
\begin{align}
      &\sum_{i=0}^{d-1} \sum_{j=d-i}^d \|{G}^{(j)}_{t-d+i+j} - {\ti G}^{(j)}_{t-d+i+j} \| + \sum_{j=0}^d \Delta^{(j)}_{t+j}=
      \\
      & \sum_{i=0}^d\sum_{j=d-i}^d \Delta_{t-d+i+j}^{(j)} =\! \sum_{i=0}^d\sum_{j=0}^i \Delta_{t+j}^{(j+d-i)}
       =\! \sum_{i=0}^d\sum_{j=i}^d \Delta_{t+i}^{(d-j+i)}
\end{align}
Substituting back in \eqref{eq:optim-delay-result} gives the result.
\end{proof}


\section{Numerical Example}
\label{sec:ne}
Recall that \texttt{OptFTRL-C} was designed to take advantage of a prediction oracle that forecasts future cost functions with unknown accuracy. Theorem \ref{thm:main} then demonstrated that the average policy regret $\nicefrac{\mathcal{R}_T}{T}$ of \texttt{OptFTRL-C} always converges to $0$, but does so faster for accurate predictions. We therefore plot the policy regret of \texttt{OptFTRL-C} when provided with either accurate or inaccurate predictions. The implementation code of the policies \texttt{OptFTRL-C}, GPC, and the benchmark $\pi^\star$, as well as the code necessary to reproduce all experiments, is available at the repository \cite{nsc_cdc_gh}.

We consider a system with $\vec{x}, \vec{u}\!\!\in\!\!\mathbb{R}^2$, $p = d = 10$,\footnote{The DAC parameter $p$ and the cost's memory $d$ are denoted $h$ and $H$ in \cite{pmlr-v97-agarwal19c}, where they are also set equal. While both are commonly referred to as ``memory", we refer to $d$ also as the delay due to the duality we presented.} and hence $M\!\in\! \mathbb{R}^{2\times20}$. The dynamics are $A=0.9\times\boldsymbol{I}_{2}, B= \boldsymbol{I}_2$, with perturbation $\vec{w}_t\in[-1,1]^2$ of maximum magnitude of $w=\sqrt{2}$. We consider a linear cost $c_t=\dtp{\vec{\alpha}_t}{\vec x_t}$, with $\vec{\alpha}_t\in[-1,1]^2$ and hence $\alpha=\sqrt{2}$. With these choices, we have the upper bound on the gradient $\|G_t\|\leq \nicefrac{\alpha\kappa_Bpw}{0.1}\leq 300$. 

In the accurate prediction case, we set $\tilde c_t(\cdot,\cdot) =  c_t(\cdot,\cdot)$ with probability $\rho=0.9$. Otherwise, we set $\tilde c_t(\cdot,\cdot)$ to be uniformally random ($\vec \alpha_t \in [-1,1]^2$). Hence, $\rho$ represents the probability of correctly predicting $c_t(\cdot,\cdot)$, and we sample it at every slot. For inaccurate prediction, we set $\rho=0.1$. We compare with GPC, which was shown to outperform the classical $\mathcal{H}_2$ and $\mathcal{H}_\infty$ controllers in a wide range of situations.  
\begin{table}[t]
    \begin{tabularx}{\linewidth}{
        |>{\centering}m{0.9cm}| 
        |>{\centering\arraybackslash}X
        |>{\centering\arraybackslash}X 
        |>{\centering\arraybackslash}X 
        |>{\centering\arraybackslash}m{1.2cm}|}\cline{2-5}
     \multicolumn{1}{c||}{}& GPC & \texttt{OptFTRL-C}  $\rho=0.9$ & \texttt{OptFTRL-C}  $\rho=0.1$ & Optimal \\
     \hline\hline
     Scenario (a)& $314,694$ & $355,061$  & $274,965$ &$376,198$ \\
     \hline
     Scenario (b) & $34,994$ & $69,468$  & $16,934$ & $96,869$ \\
     \hline
    Scenario (c) & $642$ & $7,276$  & $2,928$ & $8,191$ \\
     \hline
    \end{tabularx}
        \caption{Accumulated reward (negative cost) $-\sum_{t =1}^T c_t(\vec x_t, \vec u_t)$ of the different policies.}
        \vspace{-0.5cm}
\end{table}
\textit{\underline{In scenario (a)}}, The cost trajectory is set as $\vec\alpha_t\!\!=\!\!(1,1), \vec w_t\!=\!(1,1), \forall t$. 
This  represents a simple case where the cost function does not fluctuate. It can be seen from Fig. \ref{fig:ne_a} that  $\texttt{OptFTRL-C}$ provides the expected acceleration when the prediction is accurate, achieving an average of $60.2\%$ smaller $\nicefrac{\mathcal{R}_T}{T}$ value compared to GPC. At the same time, the average regret still \emph{attenuates} at the same $\mathcal{O}(\nicefrac{1}{\sqrt{T}})$ rate even in the case of inaccurate predictions, but with an average performance degradation of $47.3\%$. 

\textit{\underline{In scenario (b)}}, we deploy \emph{an alternating} cost function. Namely,
$\vec \alpha_t$ alternates between $(1, 1)$ and $(-0.5, -0.5)$ every $50$ steps. The disturbances are still $\vec w_t\!=\!(1, 1), \forall t$. This alternating cost represents an adversarial fluctuation in the cost trajectory, and it is where the non-stochastic framework demonstrates its efficacy. Namely, this fluctuation is enough to violate the guarantees of $\mathcal{H}_2$ controllers, but at the same time, it is small in magnitude, rendering $\mathcal{H}_\infty$ controller overly pessimistic.  It is worth noting that the fluctuation observed in Fig. $b$ is attributed to the update rule of GPC. This update directly modifies the decision variables $M_{t+1}$ based on the observed cost. In contrast, FTRL aggregates both past and future costs to determine $M_{t+1}$. With accurate predictions, this method does not induce much fluctuation as it foresees the upcoming small disturbance. Overall, \texttt{OptFTRL-C} with good prediction achieves an improvement of $32\%$ in $\nicefrac{\mathcal{R}_T}{T}$ value over GPC, while having a $13.4\%$ degradation when fed with the inaccurate oracle.

\textit{\underline{In scenario (c)}} we also deploy an alternating cost function but with different lower magnitudes. Namely, $\vec \alpha_t$ alternates between $(0.1, 0.1)$ and $(-0.5, -0.5)$ every $50$ steps, and $w_t=(0.1, 0.1)$. The goal of this scenario is to show that $\texttt{OptFTRL-C}$ can have an advantage over GPC even regardless of the prediction quality through \emph{adapatablity} to ``easy" environments (i.e., environments with small gradients). In general, GPC performance takes a hit since its learning rate is tuned with the upper bound for the gradient (i.e., $\alpha = w = \sqrt{2}, T=10^3$). The  alternating frequency, on the other hand, contributes to distinguishing more the effect of good predictions. In this scenario \texttt{OptFTRL-C} achieves an improvement of  $16.8\%$ and $69.8\%$ over GPC for $\rho=0.1$ and $\rho=0.9$, respectively. In summary, \texttt{OptFTRL-C} leverages predictions without sacrificing its resilience to their inaccuracy, or to the costs' adversity.

\section{Conclusion}
This paper looked at the NSC problem with the aim of designing DAC controllers that can leverage predictions of unknown quality. The overreaching goal is to have a controller with meaningful policy regret guarantees that are accelerated by good predictions but not void when these predictions fail. By looking at OCO-M from the lens of Delayed OCO, we were able to present the first optimistic DAC controller. Our work paves the way for further research in the ongoing pursuit of data and learning-driven control.
\section{Acknowledgment}
This work was supported by the Dutch National Growth Fund project ``Future Network Services" and by the European Commission through Grant No. 101139270 ``ORIGAMI".
\bibliography{nscref}

\begin{thebibliography}{10}
\providecommand{\url}[1]{#1}
\csname url@samestyle\endcsname
\providecommand{\newblock}{\relax}
\providecommand{\bibinfo}[2]{#2}
\providecommand{\BIBentrySTDinterwordspacing}{\spaceskip=0pt\relax}
\providecommand{\BIBentryALTinterwordstretchfactor}{4}
\providecommand{\BIBentryALTinterwordspacing}{\spaceskip=\fontdimen2\font plus
\BIBentryALTinterwordstretchfactor\fontdimen3\font minus \fontdimen4\font\relax}
\providecommand{\BIBforeignlanguage}[2]{{%
\expandafter\ifx\csname l@#1\endcsname\relax
\typeout{** WARNING: IEEEtran.bst: No hyphenation pattern has been}%
\typeout{** loaded for the language `#1'. Using the pattern for}%
\typeout{** the default language instead.}%
\else
\language=\csname l@#1\endcsname
\fi
#2}}
\providecommand{\BIBdecl}{\relax}
\BIBdecl

\bibitem{pmlr-v97-agarwal19c}
N.~Agarwal, B.~Bullins, E.~Hazan, S.~Kakade, and K.~Singh, ``Online control with adversarial disturbances,'' in \emph{Proc. of ICML}, 2019.

\bibitem{hazan20a-us}
E.~Hazan, S.~Kakade, and K.~Singh, ``The nonstochastic control problem,'' in \emph{Proc. of ALT}, 2020.

\bibitem{karapetyan2022regret}
A.~Karapetyan, A.~Iannelli, and J.~Lygeros, ``On the regret of h{$\infty$} control,'' in \emph{Proc. of IEEE CDC}, 2022.

\bibitem{10021291}
D.~Anderson, G.~Iosifidis, and D.~J. Leith, ``Lazy lagrangians for optimistic learning with budget constraints,'' \emph{IEEE/ACM Trans. on Networking}, vol.~31, no.~5, pp. 1935--1949, 2023.

\bibitem{mhaisen2022optimistic}
N.~Mhaisen, A.~Sinha, G.~Paschos, and G.~Iosifidis, ``Optimistic no-regret algorithms for discrete caching,'' \emph{Proc. ACM Meas. Anal. Comput. Syst.}, vol.~6, no.~3, pp. 1--28, 2022.

\bibitem{si2023online}
T.~Si-Salem, G.~{\"O}zcan, I.~Nikolaou, E.~Terzi, and S.~Ioannidis, ``Online submodular maximization via online convex optimization,'' in \emph{Proc. of AAAI}, 2024.

\bibitem{aslan2024fair}
F.~Aslan, G.~Iosifidis, J.~A. Ayala-Romero, A.~Garcia-Saavedra, and X.~Costa-Perez, ``Fair resource allocation in virtualized o-ran platforms,'' \emph{Proc. ACM Meas. Anal. Comput. Syst.}, vol.~8, no.~1, 2024.

\bibitem{hazan-oco-book}
E.~Hazan, ``{Introduction to Online Convex Optimization},'' \emph{arXiv:1909.05207}, 2019.

\bibitem{rakhlin2013online}
A.~Rakhlin and K.~Sridharan, ``Online learning with predictable sequences,'' in \emph{Proc. of COLT}, 2013.

\bibitem{mohri-aistats2016}
M.~Mohri and S.~Yang, ``{Accelerating Online Convex Optimization via Adaptive Prediction},'' in \emph{Proc. of AISTATS}, 2016.

\bibitem{NIPS2015_38913e1d}
O.~Anava, E.~Hazan, and S.~Mannor, ``Online learning for adversaries with memory: Price of past mistakes,'' in \emph{Proc. of NeurIPS}, 2015.

\bibitem{MAL-018}
S.~Shalev-Shwartz, ``Online learning and online convex optimization,'' \emph{Foundations and Trends in Machine Learning}, vol.~4, no.~2, 2012.

\bibitem{pmlr-v76-joulani17a}
P.~Joulani, A.~György, and C.~Szepesvári, ``A modular analysis of adaptive (non-)convex optimization: Optimism, composite objectives, and variational bounds,'' in \emph{Proc. of COLT}, 2017.

\bibitem{zhao2022non}
P.~Zhao, Y.-X. Wang, and Z.-H. Zhou, ``Non-stationary online learning with memory and non-stochastic control,'' in \emph{Proc. of AISTATS}, 2022.

\bibitem{10384153}
H.~Zhou, Z.~Xu, and V.~Tzoumas, ``Efficient online learning with memory via frank-wolfe optimization: Algorithms with bounded dynamic regret and applications to control,'' in \emph{Proc. of CDC}, 2023.

\bibitem{optimdelay21}
G.~E. Flaspohler, F.~Orabona, J.~Cohen, S.~Mouatadid, M.~Oprescu, P.~Orenstein, and L.~Mackey, ``Online learning with optimism and delay,'' in \emph{Proc. of ICML}, 2021.

\bibitem{simchowitz2020making}
M.~Simchowitz, ``Making non-stochastic control (almost) as easy as stochastic,'' in \emph{Proc. of NeurIPS}, 2020.

\bibitem{Agarwal-log}
N.~Agarwal, E.~Hazan, and K.~Singh, ``Logarithmic regret for online control,'' in \emph{Proc. of NeurIPS}, 2019.

\bibitem{foster20b-log2}
D.~Foster and M.~Simchowitz, ``Logarithmic regret for adversarial online control,'' in \emph{Proc. of ICML}, 2020.

\bibitem{Li_Das_Li_2021}
Y.~Li, S.~Das, and N.~Li, ``Online optimal control with affine constraints,'' in \emph{Proc. of the AAAI}, 2021.

\bibitem{liu2023nsc-const}
X.~Liu, Z.~Yang, and L.~Ying, ``Online nonstochastic control with adversarial and static constraints,'' \emph{arXiv:2302.02426}, 2023.

\bibitem{GHH}
P.~Gradu, J.~Hallman, and E.~Hazan, ``Non-stochastic control with bandit feedback,'' in \emph{Proc. of NeurIPS}, 2020.

\bibitem{gradu23a-ltv}
P.~Gradu, E.~Hazan, and E.~Minasyan, ``Adaptive regret for control of time-varying dynamics,'' in \emph{Proc. of L4DC}, 2023.

\bibitem{mhaisen2023adaptive}
N.~Mhaisen and G.~Iosifidis, ``Adaptive online non-stochastic control,'' \emph{arXiv:2310.02261}, 2023.

\bibitem{zhang2022adversarial}
Z.~Zhang, A.~Cutkosky, and I.~Paschalidis, ``Adversarial tracking control via strongly adaptive online learning with memory,'' in \emph{Proc. of AISTATS}, 2022.

\bibitem{shi2020online}
G.~Shi, Y.~Lin, S.-J. Chung, Y.~Yue, and A.~Wierman, ``Online optimization with memory and competitive control,'' in \emph{Proc. of NeurIPS}, 2020.

\bibitem{goel2022competitive}
G.~Goel and B.~Hassibi, ``Competitive control,'' \emph{IEEE Trans. Autom. Control}, vol.~68, no.~9, pp. 5162--5173, 2023.

\bibitem{simchowitz20a-part-state}
M.~Simchowitz, K.~Singh, and E.~Hazan, ``Improper learning for non-stochastic control,'' in \emph{Proc. of COLT}, 2020.

\bibitem{goel2023best}
G.~Goel, N.~Agarwal, K.~Singh, and E.~Hazan, ``Best of both worlds in online control: Competitive ratio and policy regret,'' in \emph{Proc. of L4DC}, 2023.

\bibitem{bemporad2007robust}
A.~Bemporad and M.~Morari, ``Robust model predictive control: A survey,'' in \emph{Robustness in identification and control}.\hskip 1em plus 0.5em minus 0.4em\relax Springer, 2007.

\bibitem{dullerud2013course}
G.~E. Dullerud and F.~Paganini, \emph{A course in robust control theory: a convex approach}.\hskip 1em plus 0.5em minus 0.4em\relax Springer Science \& Business Media, 2013, vol.~36.

\bibitem{lin2021perturbation}
Y.~Lin, Y.~Hu, G.~Shi, H.~Sun, G.~Qu, and A.~Wierman, ``Perturbation-based regret analysis of predictive control in linear time varying systems,'' in \emph{Proc. of NeurIPS}, 2021.

\bibitem{lin2022bounded}
Y.~Lin, Y.~Hu, G.~Qu, T.~Li, and A.~Wierman, ``Bounded-regret mpc via perturbation analysis: Prediction error, constraints, and nonlinearity,'' in \emph{Proc. of NeurIPS}, 2022.

\bibitem{naram-jrnl}
N.~Mhaisen, G.~Iosifidis, and D.~Leith, ``Online caching with no regret: Optimistic learning via recomm.'' \emph{IEEE Trans. Mob. Comput.}, 2023.

\bibitem{orabona2021modern}
F.~Orabona, ``{A Modern Introduction to Online Learning},'' \emph{arXiv:1912.13213}, 2023.

\bibitem{chen2015online}
N.~Chen, A.~Agarwal, A.~Wierman, S.~Barman, and L.~L. Andrew, ``Online convex optimization using predictions,'' in \emph{Proc. of SIGMETRICS}, 2015.

\bibitem{chen2016using}
N.~Chen, J.~Comden, Z.~Liu, A.~Gandhi, and A.~Wierman, ``Using predictions in online optimization: Looking forward with an eye on the past,'' \emph{SIGMETRICS Perform. Eval. Rev.}, vol.~44, no.~1, 2016.

\bibitem{power-prediction}
C.~Yu, G.~Shi, S.-J. Chung, Y.~Yue, and A.~Wierman, ``The power of predictions in online control,'' in \emph{Proc. of NeurIPS}, 2020.

\bibitem{10.5555/3454287.3455620}
Y.~Li, X.~Chen, and N.~Li, ``Online optimal control with linear dynamics and predictions: Algorithms and regret analysis,'' in \emph{Proc. of NeurIPS}, 2019.

\bibitem{zhang2021regret}
R.~Zhang, Y.~Li, and N.~Li, ``On the regret analysis of online lqr control with predictions,'' in \emph{Proc. of ACC}, 2021.

\bibitem{yu2022competitive}
C.~Yu, G.~Shi, S.-J. Chung, Y.~Yue, and A.~Wierman, ``Competitive control with delayed imperfect information,'' in \emph{Proc. of ACC}, 2022.

\bibitem{li_robustness_2022}
T.~Li, R.~Yang, G.~Qu, G.~Shi, C.~Yu, A.~Wierman, and S.~Low, ``Robustness and consistency in linear quadratic control with untrusted predictions,'' \emph{Proc. ACM Meas. Anal. Comput. Syst.}, vol.~6, no.~1, 2022.

\bibitem{levy2023optimism}
O.~Levy and Y.~Mansour, ``Optimism in face of a context: Regret guarantees for stochastic contextual mdp,'' in \emph{Proc. of AAAI}, 2023.

\bibitem{hazan-nsc-book}
E.~Hazan and K.~Singh, ``Introduction to online nonstochastic control,'' \emph{arXiv:2211.09619}, 2022.

\bibitem{Rantzer}
A.~Rantzer, ``Explicit solution to bellman equation for positive systems with linear cost,'' in \emph{Proc. of CDC}, 2022.

\bibitem{cohen2018online}
A.~Cohen, A.~Hasidim, T.~Koren, N.~Lazic, Y.~Mansour, and K.~Talwar, ``Online linear quadratic control,'' in \emph{Proc. of ICML}, 2018.

\bibitem{nsc_cdc_gh}
\BIBentryALTinterwordspacing
Optimistic-{NSC}. [Online]. Available: \url{{https://github.com/Naram-m/Optimistic-NSC}}
\BIBentrySTDinterwordspacing

\end{thebibliography}
\bibliographystyle{IEEEtran}

\end{document}